\icmltitlerunning{Stein Variational Gradient Descent as Moment Matching}
\begin{document} 

\twocolumn[
\icmltitle{Stein Variational Gradient Descent as Moment Matching}



\icmlsetsymbol{equal}{*}

\begin{icmlauthorlist}
\icmlauthor{Aeiau Zzzz}{equal,to}
\icmlauthor{Bauiu C.~Yyyy}{equal,to,goo}
\icmlauthor{Cieua Vvvvv}{goo}
\icmlauthor{Iaesut Saoeu}{ed}
\icmlauthor{Fiuea Rrrr}{to}
\icmlauthor{Tateu H.~Yasehe}{ed,to,goo} 
\icmlauthor{Aaoeu Iasoh}{goo}
\icmlauthor{Buiui Eueu}{ed}
\icmlauthor{Aeuia Zzzz}{ed}
\icmlauthor{Bieea C.~Yyyy}{to,goo}
\icmlauthor{Teoau Xxxx}{ed}
\icmlauthor{Eee Pppp}{ed}
\end{icmlauthorlist}

\icmlaffiliation{to}{University of Torontoland, Torontoland, Canada}
\icmlaffiliation{goo}{Googol ShallowMind, New London, Michigan, USA}
\icmlaffiliation{ed}{University of Edenborrow, Edenborrow, United Kingdom}

\icmlcorrespondingauthor{Cieua Vvvvv}{c.vvvvv@googol.com}
\icmlcorrespondingauthor{Eee Pppp}{ep@eden.co.uk}

\icmlkeywords{boring formatting information, machine learning, ICML}

\vskip 0.3in
]


\printAffiliationsAndNotice{\icmlEqualContribution} 

\begin{abstract}
Stein variational gradient descent (SVGD) is a non-parametric inference algorithm that evolves a set of particles to fit a given distribution of interest. 
We analyze the non-asymptotic properties of SVGD,  
showing that there exists a set of functions, which we call the \emph{Stein matching set},  whose expectations are \emph{exactly} estimated by any set of particles that satisfies the fixed point equation of SVGD. This set is the image of Stein operator applied on the feature maps of the positive definite kernel used in SVGD.
Our results provide a theoretical framework for analyzing properties of SVGD with different kernels, shedding insights on optimal kernel choice. 
In particular, we show that SVGD with linear kernels yield exact estimation of means and variances on Gaussian distributions,  while random features enable uniform concentration bounds for distributional approximation. 
We present empirical studies to verify our theory 
and propose a linear+random kernel to combine the advantages of linear and random features. 
Our results 
 offer a refreshing view of the classical inference problem as fitting Stein's identity or solving Stein equation, which may motivate more efficient algorithms. 
\end{abstract}

\section{Introduction}
One of the core problems of modern statistics and machine learning is to approximate difficult-to-compute probability distributions.
Two fundamental ideas have been extensively studied and used in the literature: 
 variational inference (VI) and Markov chain Monte Carlo (MCMC) sampling \citep[e.g.,][]{koller2009probabilistic, wainwright2008graphical}. 
MCMC has the advantage of being non-parametric and asymptotically exact, 
but often suffers from difficulty in convergence, 
while VI frames the inference into a parametric optimization of the KL divergence and works much faster in practice, but loses the asymptotic consistency.  
An ongoing theme of research is to combine the advantages of these two methodologies. 

Stein variational gradient descent (SVGD) \citep{liu2016stein} is a synthesis of MCMC and VI that inherits the non-parametric nature of MCMC while maintaining the optimization perspective of VI. In brief, SVGD for distribution $p(x)$ updates a set of particles $\{\vx_i\}_{i=1}^n$ parallelly with 
a velocity field $\ff(\cdot)$ that balances the gradient force and repulsive force, 
\begin{align*}
\vx_i  \gets \vx_i + \epsilon \ff(\vx_i), &&
\ff(\cdot) = 
\frac{1}{n} \sum_{j=1}^n \nabla_{\vx_j} \log p(\vx_j) k(\vx_j, \cdot) +  \nabla_{\vx_j} k(\vx_j, \cdot), 
\end{align*}
where $\epsilon$ is a step size and $k(\vx,\vx')$ is a positive definite kernel defined by the user.
This update is derived as approximating a kernelized Wasserstein gradient flow of KL divergence \citep{liu2017stein} with connection to 
Stein's method \citep{stein1972} and  
optimal transport \citep{ollivier2014optimal}; 
see also \citet{tmpsuk}. 
%
SVGD has been applied to solve challenging inference problems in various domains; 
examples include Bayesian inference \citep{liu2016stein, feng2017learning}, uncertainty quantification \citep{zhu2018bayesian}, reinforcement learning \citep{liu2017stein, haarnoja2017reinforcement},  learning deep probabilistic models  \citep{wang2016learning, pu2017vae} 
and Bayesian meta learning \citep{feng2017learning, kim2018bayesian}.  

However, the theoretical properties of SVGD are still largely unexplored. 
The only exceptions are \citet{liu2017stein, lu2018scaling},
which studied the partial differential equation that governs the evolution of the limit densities of the particles, with which the convergence to the distribution of interest can be established. 
However, the results in \citet{liu2017stein, lu2018scaling} are asymptotic in nature and hold only when the number of particles is very large. 
A theoretical understanding of SVGD in the finite sample size region is still missing and 
of great practical importance, 
because the particle sizes used in practice are often relatively small, 
given that SVGD with a single particle exactly reduces to finding the mode (a.k.a.
maximum a posteriori (MAP)).

\textbf{Our Results}~~ 
We analyze the finite sample properties of SVGD. 
In contrast to the dynamical perspective of \citet{liu2017stein}, 
we directly study what properties 
a set of particles would have if it satisfies the fixed point equation of SVGD, 
regardless of how we obtain them algorithmically, 
or whether the fixed point is unique. 
Our analysis indicates that the fixed point equation of SVGD is essentially a moment matching condition which ensures that the fixed point particles $\{\vx_i^*\}_{i=1}^n$ \emph{exactly} estimate the expectations of all the functions in a special function set $\F^*$, 
$$\frac{1}{n}\sum_{i=1}^nf(\vx_i^*) = \E_p f, ~~~~~\forall f\in \F^*.$$
This set $\F^*$, which we call the \emph{Stein matching set}, consists of functions obtained by applying Stein operator on the linear span of feature maps of the kernel used by SVGD.  

This framework allows us to understand properties of different kernels 
(and the related feature maps) 
by studying their Stein matching sets $\F^*$, which should ideally either match the test functions that we are actually interested in estimating, or is as large as possible to approximate the overall distribution. 
This process is difficult in general, but we make two  observations in this work: 

\emph{i}) We show that, by using linear kernels (features), SVGD can \emph{exactly} estimate
the mean and variance of Gaussian distributions when the number of particles is larger than the dimension. 
Since Gaussian-like distributions appear widely in practice, and the estimates of mean and variance are often of special importance, linear kernels can provide a significant advantage over the typical Gaussian RBF kernels, especially in estimating the variance. 

\emph{ii}) Linear features are not sufficient to approximate the whole distributions.  
We show that, by using random features of strictly positive definite kernels, 
the fixed points of SVGD approximate the whole distribution with an $O(1/\sqrt{n})$ rate in kernelized Stein discrepancy. 


Overall, our framework reveals a novel perspective that reduces the inference problem to either 
a \emph{regression problem} of fitting Stein identities,  
or \emph{inverting the Stein operator} which is framed as solving a differential equation called Stein equation. 
These ideas are significantly different from 
the traditional MCMC and VI that are currently popular in machine learning literature, and draw novel  connections to Quasi Monte Carlo and quadrature methods, among other techniques in applied mathematics. 
New efficient approximate inference methods may be motivated with our new perspectives.

\section{Background}
We introduce the basic background of
the Stein variational method, 
a framework of approximate inference that integrates ideas from Stein's method, 
kernel methods, and variational inference. 
The readers are referred to \citet{liu2016kernelized, liu2016stein, liu2017stein} and references therein for more details.
%
For notation, all vectors are assumed to be column vectors.
The differential operator $\nabla_{\vx}$ is viewed as a column vector of the same size as $\vx\in \RR^d$. For example,
$\nabla_\vx \phi$ is a $\RR^d$-valued function when $\phi$ is a scalar-valued function, and 
$\nabla_\vx^\top \ff(\vx) = \sum_{i=1}^d \partial_{x_i} \ff(\vx)$ is a scalar-valued function when $\ff$ is $\RR^d$-valued.

\paragraph{Stein's Identity}
Stein's identity forms the foundation of our framework.  
Given a positive differentiable density $p(\vx)$ on $\mathcal X \subseteq \R^d$, one form of Stein's identity is 
$$ 
\E_p [\nabla_{\vx} \log p(\vx)^\top \ff(\vx) + \nabla_{\vx}^\top  \ff(\vx)] = 0, ~~~~~~ \forall \ff, 
$$ 
which holds for any differentiable, $\R^d$-valued function $\ff$ that satisfies a proper zero-boundary condition.  
Stein's identity can be proved by a simple exercise of integration by parts. 
We may write Stein's identity in a more compact way by 
defining a Stein operator $\steinpx$:  
\begin{align*} 
\E_p [\steinpx^\top \ff(\vx)] = 0, && 
\text{where} &&\steinpx^\top \ff(\vx) =  \nabla_\vx \log p(\vx)^\top\ff(\vx)   + \nabla_\vx^\top \ff(\vx), 
\end{align*}
where $\steinpx$ is formally viewed 
as a $d$-dimensinoal column vector like $\nabla_\vx$, 
and hence $\steinpx^\top \ff$ is the inner product of $\steinpx$ and $\ff$, yielding a scalar-valued function. 

The power of Stein's identity is that, for a given distribution $p$, it defines an \emph{infinite} number of functions of form $\steinpx^\top \ff$ that has zero expectation under $p$, 
all of which only depend on $p$ through the Stein operator $\steinpx$, or 
the score function $\nabla_\vx \log p(\vx) = \frac{\nabla p(\vx)}{p(\vx)},$ which is independent of the normalization constant in $p$ that is often difficult to calculate. 

\paragraph{Stein Discrepancy on RKHS}
Stein's identity can be leveraged to characterize the discrepancy between different distributions.  
The idea is that, for two different distributions $p\neq q$, 
there shall exist a function $\ff$ such that $\E_{q} [ {\steinpx^\top \ff}] \neq 0.$ 
Consider functions $\ff$ in a $\R^d$-valued reproducing kernel Hilbert space (RKHS) of form 
$\H = \H_0\times \cdots \H_0$ where $\H_0$ is a $\R$-valued RKHS with positive definite kernel $k(\vx, \vx')$. We may define a \emph{kernelized Stein discrepancy} (KSD) \citep{liu2016kernelized,chwialkowski2016kernel, oates2014control}:  
\begin{align}\label{ksdopt}
\S_k(q~||~ p) =\max_{\ff \in \H} \big \{  \E_{q} [\steinpx^\top \ff(\vx)] ~~\colon ||\ff ||_{\H} \leq 1 \big \},
\end{align}
The optimal $\ff$ in  \eqref{ksdopt} can be solved in closed form: 
\begin{align}\label{ffstar}
\ff^*_{q,p}(\cdot) \propto \E_{\vx\sim q} [\steinpx k(\vx,\cdot)],
\end{align}
which yields a simple kernel-based representation of KSD:  %
\begin{align}\label{ksdkernel}
  \S^2_k(q~||~p) =  \E_{\vx,\vx'\sim q}  [\kappa_p(\vx,\vx') ], &
& \text{with}~~~~~~ \kappa_p(\vx,\vx') = \steinpx^\top(\steinpxp k(\vx,\vx')), 
\end{align}
where $\vx$ and $\vx'$ are i.i.d. draws from $q$, 
and $\kappa_p(\vx,\vx')$ 
is a new ``Steinalized'' positive definite kernel obtained by applying the Stein operator twice, first w.r.t. variable $\vx$ and then $\vx'$. 
It turns out that the RKHS related to kernel $\kappa_p(\vx,\vx')$ 
is exactly the space of functions obtained by applying Stein operator on functions in $\H$, that is, 
$$
\H_p = \{  \steinpx^\top \ff  \colon \forall \ff \in \H  \}. 
$$
By Stein's identity, all the functions in $\H_p$ have zero expectation under $p$. 
We can also define $\H_p^+$ 
to be the space of functions in $\H_p$ adding arbitrary constants, that is, $\H_p^+ := \{ f(\vx) + c \colon f \in  \H_p, ~~ c\in \R \}$, which can also be viewed as a RKHS, with kernel $\kappa_p(\vx,\vx') + 1$.  
Stein discrepancy can be viewed as a maximum mean discrepancy (MMD) on the Steinalized RKHS $\H_p^+$ (or equivalently $\H_p$): 
\begin{align}\label{shp}
\S_k(q~||~p) = \max_{f \in \H_p^+} \big\{   \E_q f - \E_p  f   \colon ~~~~~  || f||_{\H_p^+} \leq 1  \big\}. 
\end{align}
Different from typical MMD, 
here the RKHS space depends on distribution $p$.  
In order to make Stein discrepancy 
discriminative, in that $\S_k(q~||~p)=0$ implies $q=p$, 
we need to take kernels $k(\vx,\vx')$ so that $\H_p^+$ is sufficiently large. It has been shown that this can be achieved if $k(\vx,\vx')$ is strictly positive definite or universal, 
in a proper technical sense  \citep{liu2016kernelized, chwialkowski2016kernel, gorham2017measuring, oates2014control}. 

It is useful to consider the kernels in a random feature representation \citep{rahimi2007random}, 
\begin{align}\label{kphi}
    k(\vx,\vx') = \E_{\vw\sim p_\vw}[\phi(\vx,\vw)\phi(\vx', \vw)], 
\end{align}
where $\phi(\vx,\vw)$ is a set of features indexed by a random parameter $\vw$ drawn from a distribution $p_\vw$. 
For example, the Gaussian RBF kernel $k(\vx,\vx') = \exp(-\frac{1}{2h^2}{||\vx-\vx'||^2_2})$ admits 
\begin{align}\label{cos}
\ff(\vx, \vw) = \sqrt{2} \cos(\frac{1}{h}\vw_1^\top \vx + w_0 ),
\end{align}
where $w_0\sim \mathrm{Unif}([0,2\pi])$ and $\vw_1 \sim \normal(0, I)$. 
With the random feature representation, KSD can be rewritten into 
\begin{align}
\label{ksdw}
\D_k^2(q~||~p) = 
\E_{\vw \sim p_{\vw}} \bigg [|| \E_{\vx \sim q} [\steinpx \phi(\vx, \vw)] ||^2 \bigg ], 
\end{align}
which can be viewed as the mean square error of Stein's identity 
$\E_{\vx\sim q} [\steinpx \ff(\vx, \vw)] = 0$ over the random features. 
$\D_k^2(q~||~p) = 0$ shall imply $q = p$ if the feature set $\mathcal G = \{\ff(\vx, \vw)\colon \forall \vw\}$ is rich enough. 
Note that the RKHS $\H$ and feature set $\mathcal G$ are different; Stein discrepancy is an \emph{expected} loss function on $\mathcal G$ as shown in \eqref{ksdw},  
but a \emph{worst-case} loss on $\H$ as shown in \eqref{ksdopt}.  


\paragraph{Stein Variational Gradient Descent (SVGD)}
SVGD is a deterministic sampling algorithm motivated by Stein discrepancy. It is based on the following basic observation: 
given a distribution $q$, assume $q_{[\ff]}$ is the distribution of $\vx' =\vx+ \epsilon \ff(\vx)$ obtained by updating $\vx$ with a velocity field $\ff$, where $\epsilon$ is a small step size, then we have 
$$
\KL(q_{[\ff]}~||~p) 
= \KL(q ~||~p)  - \epsilon \E_{q}[\steinpx^\top \ff] + O(\epsilon^2), 
$$
which shows that the decrease of KL divergence 
 is dominated by $\epsilon\E_{q}[\steinpx^\top \ff]$. 
 In order to choose $\ff$ to make $q_{[\ff]}$ move towards 
 $p$ as fast as possible, we should choose $\ff$ to maximize $\E_{q}[\steinpx^\top \ff]$, whose solution is exactly $\ff^*_{q,p}(\cdot) \propto \E_{\vx\sim q} [\steinpx k(\vx,\cdot)]$ as shown in \eqref{ffstar}. 
 This suggests that $\ff^*_{q,p}$ happens to be the best velocity field that pushes the probability mass of $q$ towards $p$ as fast as possible.  

Motivated by this, SVGD approximates $q$ with the empirical distribution of a set of particles $\{\vx_i\}_{i=1}^n$, and iteratively updates the particles by 
\begin{align}\label{equ:update0}
\vx_i  \gets  \vx_i + \frac{\epsilon}{n} \sum_{j=1}^n 
[\steinp_{\vx_j} k(\vx_j,\vx_i)]. 
\end{align}
\citet{liu2017stein} studied the asymptotic properties of the dynamic system underlying SVGD, showing that the evolution of the limit density of the particles when $n\to\infty$ can be captured by a nonlinear Fokker-Planck equation, and established its weak convergence to the target distribution $p$.

However, the analysis in \citet{liu2017stein} and  \citet{lu2018scaling} do not cover the case when  the  sample size $n$ is finite, which is more relevant to the practical performance. We address this problem by directly analyzing the properties of the fixed point equation of SVGD, yielding results that work for finite sample size $n$, also independent of the update rule used to arrive the fixed points.

\section{SVGD as Moment Matching}
This section presents our main results on the moment matching properties of SVGD and the related Stein matching sets. %
We start with Section~\ref{sec:fixedsvgd} 
which introduces the basic idea and  
 characterizes the Stein matching set of SVGD with general positive definite kernels.   
We then analyze in Section~\ref{sec:featuresvgd} the special case when the rank of the kernel is less than the particle size, in which case 
the Stein matching set is independent of the fixed points themselves. 
Section~\ref{sec:gaussian} shows that 
SVGD with linear features exactly estimates the first two second-order moments of Gaussian distributions.
Section~\ref{sec:random} establishes 
a probabilistic bound when random features are used.  

\subsection{Fixed Point of SVGD}\label{sec:fixedsvgd} 
Our basic idea is rather simple to illustrate. 
Assume $X^* = \{\vx_i^*\}_{i=1}^n$ is the fixed point of SVGD and $\hat\mu_{X^*}$ its related empirical measure, then according to \eqref{equ:update0}, 
the fixed point condition of SVGD ensures 
\begin{align}\label{equ:avgk}
\E_{\vx\sim \hat \mu_{X^*}} [ \steinp_{\vx} k(\vx,  \vx_i^*) ]  =  0, ~~~~~ ~~~~~ \forall i = 1, \ldots, n. 
\end{align}
On the other hand, by Stein's identity, we have 
$$
\E_{\vx\sim p} [\steinpx k(\vx, \vx_i^*)] =0,  ~~~~~ ~~~~~ \forall i = 1, \ldots, n. 
$$
This suggests that 
$\hat\mu_{X^*}$ \emph{exactly} estimates the expectation of 
functions of form $f(\vx) = \steinpx k(\vx,\vx_i^*)$ under $p$, all of which are zero. 
By the linearity of expectation, the same holds for all the functions in the linear span of  $\steinpx k(\vx,\vx_i^*)$. 
 \begin{lem}\label{lem:exact}
Assume $X^* = \{\vx_i^*\}_{i=1}^n$ satisfies the fixed point equation \eqref{equ:avgk} of SVGD. 
We have 
$$
\avg{i}{1}{n} f(\vx_i^* ) =  \E_p f, ~~~~~~~\forall f \in \mathcal F^*, 
$$
where the Stein matching set $\F^*$ is the linear span of $\{\steinpx k(\vx,\vx_i^*)\}_{i=1}^n \cup \{1\}$, that is, $\F^*$ consists of 
\begin{align*}
 f(\vx) =  \sum_{i = 1}^n \vv a_i^\top  \steinpx k(\vx,\vx_i^*)~+~b, ~~~~~ \forall  \vv a_i\in \R^d,   ~ b \in \R. 
\end{align*}
Equivalently, $f(\vx)  = \steinpx^\top \ff(\vx) + b$ and $\ff$ is in the linear span of $\{k(\vx, \vx_i^*)\}_{i=1}^n$, that is, $\ff(\vx) = \sum_{i=1}^n \vv a_i k(\vx, \vx_i^*). $
 \end{lem}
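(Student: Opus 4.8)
The plan is to verify the claimed identity directly, by decomposing an arbitrary $f \in \F^*$ into a ``Steinalized'' part and a constant part, and showing that the empirical average and the $p$-expectation both collapse onto the same constant $b$. Concretely, I would fix $f(\vx) = \sum_{i=1}^n \vv a_i^\top \steinpx k(\vx, \vx_i^*) + b$ for arbitrary $\vv a_i \in \R^d$ and $b \in \R$, and then treat the two sides of the target equation separately, using nothing more than linearity of the sum and of $\E_p$.

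For the empirical side, I would evaluate $\avg{j}{1}{n} f(\vx_j^*)$ and swap the order of the two summations (over $j$ from the average and over $i$ inside $f$), so that each inner term becomes $\vv a_i^\top \big( \avg{j}{1}{n} \steinp_{\vx_j^*} k(\vx_j^*, \vx_i^*) \big)$. The bracketed average is precisely the left-hand side of the fixed point equation \eqref{equ:avgk}, which vanishes for every $i$; hence the entire Steinalized part drops out and the empirical average equals $b$. For the population side, I would expand $\E_p f$ by linearity into $\sum_{i=1}^n \vv a_i^\top \E_{\vx\sim p}[\steinpx k(\vx, \vx_i^*)] + b$ and invoke Stein's identity: since each $k(\cdot, \vx_i^*)$ is a fixed differentiable test function, $\E_{\vx\sim p}[\steinpx k(\vx, \vx_i^*)] = 0$, so $\E_p f = b$ as well. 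Equating the two values gives $\avg{i}{1}{n} f(\vx_i^*) = \E_p f$, which is the assertion.

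To establish the equivalent representation, I would unfold the definition $\steinpx^\top \ff = \nabla_\vx \log p(\vx)^\top \ff + \nabla_\vx^\top \ff$ at $\ff(\vx) = \sum_{i=1}^n \vv a_i k(\vx, \vx_i^*)$. Because each $\vv a_i$ is a constant vector, both the score term and the divergence term distribute over the sum and factor the $\vv a_i$ out, reproducing $\sum_{i=1}^n \vv a_i^\top \steinpx k(\vx, \vx_i^*)$ term by term (using $\nabla_\vx \log p(\vx)^\top \vv a_i = \vv a_i^\top \nabla_\vx \log p(\vx)$ and $\nabla_\vx^\top(\vv a_i k) = \vv a_i^\top \nabla_\vx k$). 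This identifies the two parametrizations of $\F^*$.

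The argument is essentially two applications of linearity plus the two ``zero'' identities (the fixed point equation and Stein's identity), so I expect no serious computational obstacle. The only point demanding care is the validity of Stein's identity for the chosen test functions: it requires $k(\cdot, \vx_i^*)$, and its score-weighted version, to satisfy the zero-boundary condition under $p$ so that the integration-by-parts step behind $\E_{\vx\sim p}[\steinpx k(\vx, \vx_i^*)] = 0$ goes through. I would therefore state explicitly that we work under the same standing regularity assumption on $k$ and $p$ under which the KSD in \eqref{ksdkernel} is well defined.
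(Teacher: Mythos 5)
Your proposal is correct and follows essentially the same route as the paper: the fixed point condition kills the empirical expectation of each $\steinpx k(\vx,\vx_i^*)$, Stein's identity kills its expectation under $p$, and linearity (plus the trivial constant term) extends this to the whole linear span, with the alternative parametrization following from linearity of the Stein operator. Your added remark about the boundary condition needed for Stein's identity is a reasonable precaution and matches the paper's standing assumptions.
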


Extending Lemma~\ref{lem:exact}, 
one can readily see that the SVGD fixed points can approximate the expectation of functions that are close to $\F^*$. Specifically, let $\F^*_{\epsilon}$ be the $\epsilon$ neighborhood of $\F^*$, that is, 
$\F_\epsilon^* = \{f \colon \inf_{f'\in \F} ||f -f'||_\infty \leq \epsilon  \},$ 
then it is easily shown that 
$$
\left | \avg{i}{1}{n} f(\vx_i^* ) -   \E_p f \right |  \leq 2 \epsilon, ~~~~~~~\forall f \in \mathcal F^*_\epsilon. 
$$
Therefore, the SVGD approximation can be viewed as \emph{prioritizing} the functions within, or close to, $\F^*$. This is different in nature from Monte Carlo, which approximates the expectation of \emph{all bounded variance functions} with the same $O(1/\sqrt{n})$ error rate. 
Instead, SVGD shares more similarity with the \emph{quadrature} and \emph{sigma point methods}, 
which also find points (particles) to match the expectation on certain class of functions, but mostly only on polynomial functions and for simple distributions such as uniform or Gaussian distributions. 
SVGD provides a more general approach that can match moments of richer classes of functions for more general complex multivariate distributions.  
As we show in Section~\ref{sec:gaussian}, when using polynomial kernels, SVGD reduces to matching  polynomials when applied to multivariate Gaussian distributions.  

In this view, the performance of SVGD is essentially decided by 
the Stein matching set $\F^*$. We shall design the algorithm, by engineering the kernels or feature maps, to make 
$\F^*$ as large as possible in order to approximate the distribution well, or include the test functions
of actual interest, such as mean and variance.  

\subsection{Fixed Point of Feature-based SVGD}
\label{sec:featuresvgd}

One undesirable property of $\F^*$ in Lemma~\ref{lem:exact} is that it depends  
on the values of the fixed point particles $X^*$, whose properties are difficult to characterize \emph{a priori}. 
This makes it difficult to infer what kernel should be used to obtain a desirable $\F^*$. 
It turns out the dependency of $\F$ on $X^*$ can be essentially decoupled by using \emph{degenerated kernels} corresponding to a finite number of feature maps. 
Specifically, we consider kernels of form 
$$
k(\vx,\vx') = \sum_{\ell=1}^m \phi_\ell(\vx) \phi_\ell(\vx'), 
$$
where we assume the number $m$ of features is no larger than the particle size $n$. 
Then, the fixed point of SVGD reduces to 
\begin{align}\label{phi1}
 \E_{\vx\sim \hat\mu_{X^*}} [\sum_{\ell=1}^m \steinpx \phi_\ell(\vx) \phi_\ell(\vx_j^*) ]   = 0, ~~~~~\forall j\in [n]. 
\end{align}
Define $\Phi = [\phi_\ell(\vx_j^*)]_{\ell,j}$ which is a matrix of size $(m\times n)$ . If $rank(\Phi) \geq m$, then \eqref{phi1} reduces to 
\begin{align}
 \E_{\vx\sim \hat{\mu}_{X^*}} [\steinpx \phi_\ell(\vx)] = 0, 
 ~~~~~ \forall \ell=1,\ldots, m,
\end{align}
where the test function $f(\vx) := \stein_\vx \phi_\ell(\vx)$ no longer depends on the fixed point $X^*$.  
\begin{thm}
\label{thm:phi}
Assume $X^*$ is a fixed point of SVGD with kernel $k(\vx,\vx') =\sum_{\ell=1}^m\phi_\ell(\vx)\phi_\ell(\vx')$. 
Define the $(m\times n)$ matrix $\Phi = [\phi_{\ell}(\vx_i^*)]_{\ell\in[m],i\in[n]}$.
If $\mathrm{rank}(\Phi) \geq m$, then 
$$
\frac{1}{n}\sum_{i=1}^n f(\vx_i^*) = \E_p f, ~~~~ \forall f\in \F^*, 
$$
where the Stein matching set $\F^*$ is the linear span of  $\{\steinpx \phi_\ell(\vx)\}_{\ell=1}^m \cup \{1\}$, that is, it is set of the functions of form 
\begin{align}\label{fellb}
\!\!\! f(\vx) =\sum_{\ell=1}^m \vv a_\ell^\top \steinpx \phi_\ell(\vx) ~+~  b, && 
\forall ~ \vv a_\ell \in \R^d, ~ b\in \R. 
\end{align}
\end{thm}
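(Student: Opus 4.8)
The plan is to make rigorous the informal reduction sketched just before the statement: the only real work is a short linear-algebra argument that turns the $n$ fixed-point equations indexed by the particles into $m$ feature-wise moment conditions, after which Stein's identity finishes the job. First I would substitute the degenerate kernel $k(\vx,\vx')=\sum_{\ell=1}^m\phi_\ell(\vx)\phi_\ell(\vx')$ into the fixed-point equation \eqref{equ:avgk}. Because $\phi_\ell(\vx_i^*)$ does not depend on the integration variable $\vx$, it factors out of the expectation, so the $i$-th fixed-point equation reads
\begin{align*}
\sum_{\ell=1}^m \phi_\ell(\vx_i^*)\,\vv c_\ell = 0, \qquad \forall\, i\in[n], \qquad \text{where } \vv c_\ell := \E_{\vx\sim\hat\mu_{X^*}}[\steinpx\phi_\ell(\vx)]\in\R^d .
\end{align*}
Each $\vv c_\ell$ is exactly the residual of Stein's identity for the feature $\phi_\ell$ under the empirical measure, and the goal is to show all of them vanish.

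Next I would read these equations as a single matrix identity. Collecting the residuals into the $d\times m$ matrix $C=[\vv c_1,\dots,\vv c_m]$, the $n$ equations above are precisely $C\Phi=0$, where $\Phi=[\phi_\ell(\vx_i^*)]_{\ell\in[m],i\in[n]}$ is the $(m\times n)$ feature matrix. This is where the rank hypothesis enters. Since $\Phi$ has only $m$ rows, $\mathrm{rank}(\Phi)\ge m$ forces $\mathrm{rank}(\Phi)=m$, i.e.\ full row rank, so the columns of $\Phi$ span $\R^m$. Reading $C\Phi=0$ row by row, each of the $d$ rows of $C$ is a vector in $\R^m$ orthogonal to every column of $\Phi$, hence orthogonal to all of $\R^m$, hence zero. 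Therefore $C=0$, which is exactly the reduced condition $\E_{\vx\sim\hat\mu_{X^*}}[\steinpx\phi_\ell(\vx)]=0$ for every $\ell\in[m]$, now free of any dependence on the fixed point $X^*$.

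Finally I would invoke Stein's identity on the $p$ side. Applying it to the coordinate vector field $\phi_\ell\,\vv e_k$ for each $k$ gives $\E_p[\steinpx\phi_\ell(\vx)]=0$ componentwise, so $\E_p[\steinpx\phi_\ell]=\E_{\hat\mu_{X^*}}[\steinpx\phi_\ell]=0$ in $\R^d$ for all $\ell$. By linearity of expectation, for any $\vv a_\ell\in\R^d$ and $b\in\R$ the function $f(\vx)=\sum_{\ell=1}^m\vv a_\ell^\top\steinpx\phi_\ell(\vx)+b$ satisfies $\frac1n\sum_{i=1}^n f(\vx_i^*)=\E_{\hat\mu_{X^*}}f=\E_p f$, which is the claim. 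I expect the only delicate point to be the rank step: one must confirm that $\mathrm{rank}(\Phi)\ge m$ really does let the particle-indexed equations pin down all $m$ residual vectors simultaneously, and note the trivial consistency of the constant $b$. Everything else is bookkeeping — pulling the feature values out of the expectation and applying Stein's identity coordinatewise, the latter requiring only the standard zero-boundary condition on the features $\phi_\ell$.
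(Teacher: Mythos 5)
Your proof is correct and follows essentially the same route as the paper, which presents this argument inline just before the theorem statement (substituting the degenerate kernel into the fixed-point equation, using $\mathrm{rank}(\Phi)\geq m$ to conclude each residual $\E_{\hat\mu_{X^*}}[\steinpx\phi_\ell]$ vanishes, then invoking Stein's identity as in Lemma~\ref{lem:exact}). Your matrix formulation $C\Phi=0$ with the full-row-rank argument is a clean and faithful way of making the paper's rank step explicit.
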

Note that the rank condition implies that we must have $m \leq n$. The idea is that $n$ particles can at most match $n$ linearly independent features exactly. 
Here, although the rank condition 
still depends on the fixed point $X^*=\{\vx^*_i\}_{i=1}^n$ and 
cannot be guaranteed \emph{a priori}, it can be numerically verified once we obtain the values of $X^*$. 
In our experiments, we find that the rank condition tends to always hold practically when $n=m$. In cases  when it does fail to satisfy, we can always rerun the algorithm with a larger $n$ until it is satisfied.  
Intuitively, it seems to require bad luck to have $\Phi$ low rank when there are more particles than features ($n\geq m$), although a theoretical guarantee is still missing. 

\paragraph{Query-Specific Inference as Solving Stein Equation}
Assume we are interested in a \emph{query-specific} task of estimating $\E_p f$ for a specific test  function $f$. 
In this case, we should ideally select the features $\{\phi_\ell\}_\ell$ such that \eqref{fellb} holds to yield an exact estimation of $\E_p f$.  
By the linearity of the Stein operator, \eqref{fellb} is equivalent to 
\begin{align}\label{steineq}
\texttt{Stein Equation:} && 
f(\vx) = \steinpx^\top \ff(\vx) + b, 
\end{align}
where $\ff(\vx) = \sum_{\ell=1}^m \vv a_\ell \phi_\ell(\vx)$.  
Eq~\eqref{steineq} is known as \emph{Stein Equation} when solving $\ff$ and $b$ with a given $f$, which  effectively calculates the inverse of Stein operator. 

Stein equation plays a central role in Stein's method as a theoretical tool \citep{barbour2005introduction}. 
Here, we highlight its fundamental connection to the approximate inference problem: 
if we can exactly solve $\ff$ and $b$ for a given $f$,  
then the inference problem regarding $f$ is already solved (without running SVGD), since we can easily see that $\E_p f = b$ by taking expectation from both sides of \eqref{steineq}. 

Mathematically, this reduces the integration problem of estimating $\E_p f$ into solving a differential equation. 
It suggests that Stein equation is at least as hard as the inference problem itself, and we should not expect a tractable way to solve it in general cases. 
On the other hand, it suggested that efficient ways of approximate inference may be developed by approximate solutions of Stein equation. 
Similar idea has been investigated in  
\citet{oates2014control}, which 
developed a kernel approximation of Stein equation in the case based on a given set of points.
SVGD allows us to 
further extend this idea by optimizing the set of points (particles) on which approximation is defined. 

\subsection{Linear Feature SVGD is Exact for Gaussian}
\label{sec:gaussian}
%
Although Stein equation is difficult to solve in general, 
it is significantly simplified when the distribution $p$ of interest is Gaussian. 
In the following, we show that when $p$ is a multivariate Gaussian distribution, 
we can use linear features, relating to a linear kernel $k(\vx,\vx') = \vx^\top \vx'+1$, to 
ensure that SVGD exactly estimates all the first and second order moments of $p$.  
This insight provides an important practical guidance on the optimal kernel choices for Gaussian-like distributions. 
\begin{thm}\label{thm:gaussian}
Assume $X^*$ is a fixed point of SVGD with polynomial kernel $k(\vx,\vx') = \vx^\top \vx' + 1$. 
Let $\F^*$ be the Stein matching set in Theorem \ref{thm:phi}. 
If $p$ is a multivariate normal distribution on $\R^d$, 
then $\F \subseteq \mathrm{Poly}(2)$, where $\mathrm{Poly}(2)$ 
is the set of all polynomials upto the second order, that is, 
$\mathrm{Poly}(2) = \{\vx^\top A \vx + \vv b^\top \vx +c \colon A\in \R^{d\times d}, ~ \vv b\in \R^d, ~ c\in \R \}$. 

Further, denote by $\Phi$ the $(d+1)\times n$ matrix defined by 
$$
\Phi =  \begin{bmatrix} 
\vx_1& \vx_2 & \cdots & \vx_n \\
1 & 1 & \cdots & 1 
\end{bmatrix}.
$$
If $\mathrm{rank}(\Phi) \geq d+1$, then $\F = \mathrm{Poly}(2)$. In this case, any fixed point of SVGD exactly estimates both the mean and the covariance matrix of the target distribution. 
\end{thm}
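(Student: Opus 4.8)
The plan is to exploit the simple feature representation of the linear kernel, $k(\vx,\vx') = \vx^\top\vx' + 1 = \sum_{\ell=1}^d x_\ell x_\ell' + 1$, which corresponds to the $m = d+1$ features $\phi_\ell(\vx) = x_\ell$ for $\ell = 1,\dots,d$ and $\phi_{d+1}(\vx) = 1$. Note that the matrix $\Phi$ in the statement is precisely the feature matrix $[\phi_\ell(\vx_i^*)]$ of Theorem~\ref{thm:phi}, so the hypothesis $\mathrm{rank}(\Phi)\geq d+1$ coincides with the rank condition $\mathrm{rank}(\Phi)\geq m$ there. By Theorem~\ref{thm:phi} it therefore suffices to identify the span $\F^*$ of $\{\steinpx\phi_\ell\}_{\ell=1}^{d+1}\cup\{1\}$ and show it equals $\mathrm{Poly}(2)$; combined with the rank condition, the moment-matching claim then follows at once.

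First I would compute the Stein features explicitly. For a Gaussian $p = \normal(\mu, \Sigma)$ the score is $\nabla_\vx\log p(\vx) = -\Sigma^{-1}(\vx - \mu)$, so for $\ell \le d$,
\begin{align*}
\steinpx\phi_\ell(\vx) = \nabla_\vx\log p(\vx)\, x_\ell + \nabla_\vx x_\ell = -\Sigma^{-1}(\vx-\mu)\,x_\ell + \vv e_\ell,
\end{align*}
where $\vv e_\ell$ is the $\ell$-th coordinate vector, while $\steinpx\phi_{d+1}(\vx) = -\Sigma^{-1}(\vx-\mu)$. Contracting with any $\vv a_\ell\in\R^d$ gives $\vv a_\ell^\top\steinpx\phi_\ell(\vx) = -\vv a_\ell^\top\Sigma^{-1}(\vx-\mu)\,x_\ell + a_{\ell\ell}$, a polynomial of degree at most two, and $\vv a_{d+1}^\top\steinpx\phi_{d+1}$ is affine. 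Summing and adding the constant $b$ shows every element of $\F^*$ lies in $\mathrm{Poly}(2)$, establishing the inclusion $\F^* \subseteq \mathrm{Poly}(2)$ without any rank assumption.

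For the reverse inclusion I would argue by degree. Writing $\vv u_\ell = -\Sigma^{-1}\vv a_\ell$, the degree-two part of $\sum_{\ell\le d}\vv a_\ell^\top\steinpx\phi_\ell$ is $\sum_{\ell\le d}(\vv u_\ell^\top\vx)\,x_\ell = \vx^\top U\vx$, where $U$ is the matrix whose $\ell$-th row is $\vv u_\ell^\top$. Because $\Sigma^{-1}$ is invertible, as the $\vv a_\ell$ range over $\R^d$ the $\vv u_\ell$ range over all of $\R^d$, so $U$ ranges over $\R^{d\times d}$ and the induced quadratic forms exhaust all symmetric quadratics. Such a choice also produces a fixed affine remainder, but the $\ell = d+1$ feature supplies an arbitrary linear term $-\vv a_{d+1}^\top\Sigma^{-1}\vx$ (again using invertibility of $\Sigma^{-1}$) and the constant $1$ supplies an arbitrary constant, so these lower-order contributions can be corrected independently to reach any target. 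Hence $\F^* = \mathrm{Poly}(2)$.

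Finally, combining $\F^* = \mathrm{Poly}(2)$ with the matching identity $\frac1n\sum_i f(\vx_i^*) = \E_p f$ of Theorem~\ref{thm:phi}, valid here because the rank condition holds, yields exact estimation: taking $f(\vx) = x_i$ shows the empirical mean equals $\mu$, and taking $f(\vx) = x_i x_j$ shows the empirical second moments equal $\E_p[x_i x_j]$; since the empirical mean is already exact, the empirical covariance equals $\Sigma$ as well. The main obstacle is the reverse-inclusion step, where one must verify that the quadratic, linear, and constant pieces can be realized \emph{independently}; the invertibility of $\Sigma$ together with the presence of both the constant feature and the degree-one features (supplying free affine corrections) is exactly what makes this decoupling possible. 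The subset inclusion and the final moment extraction are routine by comparison.
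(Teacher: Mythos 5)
Your proof is correct, but it is organized differently from the paper's. The paper does not route the argument through Theorem~\ref{thm:phi} at all: it starts from the Lemma~\ref{lem:exact} representation $f(\vx)=\sum_i \vv a_i^\top \steinpx k(\vx,\vx_i^*)+b$, expands it using the Gaussian score $\nabla_\vx \log p(\vx)=-Q(\vx-\vv\mu)$ into the form $\vx^\top W\vx+\vv v^\top \vx+c$ with $W=-XB^\top$, $\vv v^\top=(\vv\mu^\top X-\vv e^\top)B^\top$, $B=QA$, and then shows surjectivity onto $\mathrm{Poly}(2)$ by solving the linear system $\bigl[\begin{smallmatrix}-I & 0\\ \vv\mu^\top & -1\end{smallmatrix}\bigr]\Phi B=\bigl[\begin{smallmatrix}W\\ \vv v^\top\end{smallmatrix}\bigr]$, which is where the rank condition on $\Phi$ enters. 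You instead invoke Theorem~\ref{thm:phi} as a black box (after checking that the statement's $\Phi$ is exactly the feature matrix for $\phi_\ell(\vx)=x_\ell$, $\phi_{d+1}=1$), which reduces everything to showing that the span of the \emph{particle-free} Steinalized features $\{\steinpx\phi_\ell\}\cup\{1\}$ equals $\mathrm{Poly}(2)$; there the only nontrivial ingredient is invertibility of $\Sigma^{-1}$, used to realize an arbitrary $U\in\R^{d\times d}$ in the quadratic part and an arbitrary linear correction from the constant feature. The two arguments use the same computations (Gaussian score, invertible precision matrix), but your factorization cleanly separates the roles of the two hypotheses -- the rank condition is consumed entirely by Theorem~\ref{thm:phi} to get moment matching, and the identification $\F^*=\mathrm{Poly}(2)$ needs no reference to the particles -- whereas the paper's version couples them in one linear system and, as a byproduct, also establishes that the smaller Lemma~\ref{lem:exact} span already exhausts $\mathrm{Poly}(2)$ under the rank condition. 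Both are complete; your reverse-inclusion step correctly notes that the quadratic, linear, and constant pieces can be adjusted independently, which is the only place a gap could have arisen.
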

%
More generally, if the features are polynomials of order $j$, its related Stein matching set should be polynomials of order $j+1$ for Gaussian distributions. We do not investigate this further because 
it is less common to estimate higher order moments in multivariate settings.  

 Theorem~\ref{thm:gaussian} suggests that it is a good heuristic to include linear features in SVGD, 
because Gaussian-like distributions appear widely thanks to the central limit theorem and Bernstein–von Mises theorem, 
and the main goal of inference is often to estimate the mean and variance. 
In contrast, the more commonly used Gaussian RBF kernel 
does not have similar exact recovery results for the mean and variance, even for Gaussian distributions. 

A nice property of our result is that once we use fewer features than the particles and solve the fixed point exactly, 
the features do not ``interfere'' with each other. 
This allows us to ``program'' our algorithm by adding different types of features 
that serve different purposes in different cases. 

\subsection{Random feature SVGD}\label{sec:random}
The linear features are not sufficient for providing the consistent estimation of the whole distribution, even for Gaussian distributions.  
Non-degenerate kernels are required to obtain bounds on the whole distributions, but they complicate the analysis because their Stein matching set depends on the solution $X^*$ as shown in Lemma~\ref{lem:fg}. 
Random features can be used to sidestep this difficulty \citep{rahimi2007random}, enabling us to analyze a random feature variant of SVGD with probabilistic bounds. 

To set up, assume $k(\vx,\vx')$ is 
a universal kernel whose Stein discrepancy $\D_k(q~||~p)$ yields a discriminative measure of differences between distributions. 
Assume $k(\vx, \vx')$ yields the random feature representation in \eqref{kphi}, and we can approximate it 
%
by drawing $m$ random features,  
$$
\hat k(\vx,\vx') = \frac{1}{m}\sum_{\ell=1}^m {\phi}(\vx, \vw_\ell) \phi(\vx', \vw_\ell),
$$
where $\vw_{\ell}$ are i.i.d. drawn from $p_\vw$. 
We assume $m\leq n$, then running SVGD with kernel $\hat k(\vx, \vx')$ (with the random features fixed during the iterations) yields a matching set that decouples with the fixed point $X^*$. 
In this way, our result below establish that 
$\D_k(\hat\mu_{X^*}~||~p) = \tilde O(1/\sqrt{n})$ with high probability. According to \eqref{shp}, this provides a uniform bound of $\E_{\hat\mu_{X^*}}f-\E_pf$ for all functions in the unit ball of $\H_p^+$. 

Here, random features are introduced mainly for facilitating theoretical analysis, 
but we also find random feature SVGD works comparably, and sometimes even better than SVGD with the original non-degenerate kernel (see Appendix). This is because with a finite number $n$ of particles, at most $n$ function basis of $k(x,x')$ can be effectively used, even if $k(x,x')$  itself has an infinite rank. 
From the perspective of moment matching, 
there is no benefit to use universal kernels when the particle size $n$ is finite. 

In the sequel, we first explain the intuitive idea behind our result, highlighting a perspective that views inference as fitting a zero-valued curve with Stein's identity, and then introduce technical details. 

\newcommand{\Xs}{{X^*}}
\newcommand{\vxs}{x^*}
\textbf{Distributional Inference as Fitting Stein's Identity} 
Recall that our goal can be viewed as finding  particles $\Xs=\{\vx_i^*\}$  such that their empirical 
$ \hat\mu_{\Xs}$ 
approximates the target distribution $p$. 
We re-frame this into finding $\hat\mu_\Xs$ such that Stein's identity holds (approximately): 
\begin{align*}
    \text{Find $\hat\mu_{\Xs}$} && s.t. &&  \E_{\hat\mu_{\Xs}}[\steinpx \phi(\vx,\vw)]\approx 0, ~~~ \forall \vw. 
\end{align*}
We may view this as a special curve fitting problem: considering $\vv g_{X}(\vw) = \E_{\hat\mu_X} [\steinpx \phi(\vx,\vw)]$, we want to find ``parameter'' $X$ such that $\vv g_X(\vw) \approx 0$ for all inputs $\vw$. 
The kernelized Stein discrepancy (KSD), as shown in \eqref{ksdw}, 
can be viewed as the expected rooted mean square loss of this fitting problem:
\begin{align}
\label{gxD}
\D_k^2(\hat\mu_X~||~p) = 
\E_{\vw\sim p_\vw}\left [||\vv g_X(\vw)||^2_2\right ] 
\end{align}
When replacing $k(\vx,\vx')$ with its random feature approximation $\hat k(\vx, \vx')$, the corresponding KSD can be viewed as an empirical loss on random sample $\{\vw_\ell\}$ from $p_\vw$: 
$$
\D_{\hat k}^2(\hat\mu_X ~||~p) = 
{\frac{1}{m}\sum_{\ell=1}^m 
\left [||\vv g_X(\vw_\ell)||^2_2\right ] }.
$$
By running SVGD with $\hat k(\vx,\vx')$, 
we acheive $\vv g_{X^*}(\vw_\ell)=0$ for all $\ell$ at the fixed point, implying  
a zero empirical loss $\D_{\hat k}(\hat\mu_{X^*}~||~p) =0$ assuming the rank condition holds. 

The key question, however, is to bound the expected loss $\D_k(\hat \mu_{\Xs}~||~p)$,
which can be achieved using generalization bounds in statistical learning theory. 
In fact, standard results in learning theory suggests that the difference between the empirical loss and expected loss is $O(m^{-1/2})$,  yielding  $\D_k^2(\hat\mu_{\Xs}~||~p)  = O(m^{-1/2})$. 
However, following \eqref{shp}, 
this implies $\E_{\hat\mu_{\Xs}}f - \E_pf=O(m^{-1/4})$ for $f\in\H^+_p$, which does not acheive the standard $O(m^{-1/2})$. 
%
 Fortunately, note that our setting is noise-free, and we achieve zero empirical loss; 
 thus, we can get a better rate of $\D_k^2(\hat\mu_X~||~p)  = \tilde O(m^{-1})$ 
 using the techniques in \citet{srebro2010smoothness}. 


\textbf{Bound for Random Features} 
We now present our concentration bounds of random feature SVGD. 
   \begin{ass}\label{ass:ass}
  1)  Assume $\{\phi(\vx,\vw_\ell)\}_{\ell=1}^m$ is a set of random features with $\vw_\ell$ i.i.d. drawn from $p_\vw$ on domain $\W$, and $X^*= \{\vx_i^*\}_{i=1}^n$ is an \emph{approximate} fixed point of SVGD with random feature $\phi(\vx,\vw_\ell)$ in the sense that 
$$
|\E_{\vx \sim \hat \mu_{X^*}}\steinp_{x^j} \phi(\vx, \vw_\ell)| \leq \frac{\epsilon_j}{\sqrt{m}}.  
$$
where $\stein_{x^j}$ is the Stein operator w.r.t. the $j$-th coordinate $x^j$ of $\vx$. Assume  $\epsilon^2:=\sum_{j=1}^d\epsilon_j^2 <\infty$. 

2) Let $ \sup_{\vx \in \mathcal X,\vw\in \W} |\steinp_{x^j}\phi(\vx, \vw)|  = M_j$, and $M^2:=\sum_{j=1}^dM_j^2<\infty$. This may imply that $\mathcal X$ has to be compact since $\nabla_{\vx} \log p(\vx)$ is typically unbounded on non-compact $\X$ (e.g., when $p$ is standard Gaussian,$\nabla_{\vx} \log p(\vx) = \vx$). 

3) Define function set 
$$\steinp_j\Phi = \{ \vw \mapsto   \stein_{x^j}\phi(\vx, \vw)  \colon  \forall \vx \in \X \}.$$ 
We assume the Rademacher complexity of $\steinp_{j}\Phi$ satisfies $\Rm(\stein_j \Phi) \leq R_j/ \sqrt{m},$ and $R^2 := \sum_{j=1}^d R_j^2<\infty$.   
\end{ass}

\begin{thm}\label{thm:unibound}
Under Assumption~\ref{ass:ass}, for any $\delta > 0$, we have with  at least  probability $1-\delta$ (in terms of the randomness of feature parameters $\{\vw_\ell\}_{\ell=1}^m$),  
\begin{align}
\label{bdslogm}
\!\!
\!\!\!\! \S_k(\hat\mu_{X^*}~||~p) \leq  \frac{C}{\sqrt{m}}\bigg [ \epsilon^2 + \log^{3}m  + \log(1/\delta) \bigg]^{1/2}, 
\end{align}
where $C$ is a constant that depends on $R$ and $M$.   
\end{thm}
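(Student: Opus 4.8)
The plan is to read $\S_k(\hat\mu_{X^*}\,||\,p)$ as the square root of an \emph{expected} squared loss, bound the corresponding \emph{empirical} loss using the approximate fixed-point condition, and then transfer ``small empirical loss'' to ``small expected loss'' via a uniform generalization bound. The subtlety that forces the use of Rademacher complexity is that $X^*$ depends on the sampled features $\{\vw_\ell\}$, so a single concentration inequality for a fixed function does not suffice; the bound must hold uniformly over the class of fits. First I would set up the reduction: writing $g_X^j(\vw)=\E_{\vx\sim\hat\mu_X}[\stein_{x^j}\phi(\vx,\vw)]$ for each coordinate $j$, so that $\vv g_X(\vw)=(g_X^1(\vw),\ldots,g_X^d(\vw))$, the random-feature identity \eqref{ksdw} gives $\S_k^2(\hat\mu_X\,||\,p)=\D_k^2(\hat\mu_X\,||\,p)=\sum_{j=1}^d \E_{\vw\sim p_\vw}\!\big[(g_X^j(\vw))^2\big]$. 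Hence it suffices to bound each expected squared loss $\E_{\vw}[(g_{X^*}^j(\vw))^2]$ and sum over $j$.

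Next I would control the empirical loss. Assumption~\ref{ass:ass}(1) states exactly that $|g_{X^*}^j(\vw_\ell)|\le \epsilon_j/\sqrt m$ for every $\ell$ and $j$, so the per-coordinate empirical loss obeys $\frac1m\sum_{\ell=1}^m (g_{X^*}^j(\vw_\ell))^2\le \epsilon_j^2/m$, i.e. the empirical KSD is already $O(\epsilon^2/m)$. To pass to the expectation, fix $j$ and note that the fit $g_X^j$ ranges over the class of $n$-fold averages $\mathcal G_j=\{\vw\mapsto \frac1n\sum_{i=1}^n \stein_{x^j}\phi(\vx_i,\vw):\vx_1,\dots,\vx_n\in\X\}$ built from the base class $\stein_j\Phi$. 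Since averaging (a convex combination) does not increase Rademacher complexity, $\Rm(\mathcal G_j)\le \Rm(\stein_j\Phi)\le R_j/\sqrt m$ by Assumption~\ref{ass:ass}(3), and every element is bounded by $M_j$ by Assumption~\ref{ass:ass}(2). I would then invoke an optimistic-rate bound for nonnegative, bounded, $H$-smooth losses (the squared loss qualifies) in the spirit of \citet{srebro2010smoothness}: with probability at least $1-\delta$, uniformly over $g^j\in\mathcal G_j$,
\[
\E_\vw[(g^j)^2]\ \le\ C_1\Big(\tfrac1m\sum_{\ell=1}^m(g^j(\vw_\ell))^2\Big)+C_2\,\frac{(R_j^2+M_j^2)\log^{3}m}{m}+C_3\,\frac{M_j^2\log(1/\delta)}{m}.
\]
The decisive feature is that, because we achieve a near-zero empirical loss, the generic $O(1/\sqrt m)$ Rademacher term is upgraded to the ``fast'' $O(1/m)$ rate, at the price of the polylogarithmic $\log^3 m$ factor; this is precisely the mechanism flagged in the discussion preceding the theorem.

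To assemble the final bound I would plug the empirical estimate $\epsilon_j^2/m$ from the previous step into the displayed inequality, sum over $j=1,\ldots,d$, and use $\epsilon^2=\sum_j\epsilon_j^2$, $R^2=\sum_j R_j^2$, $M^2=\sum_j M_j^2$, together with a union bound over the $d$ coordinates (which only affects constants and the $\log(1/\delta)$ term). This yields $\S_k^2(\hat\mu_{X^*}\,||\,p)\le \frac{C^2}{m}\big[\epsilon^2+\log^{3}m+\log(1/\delta)\big]$ after absorbing $R$ and $M$ into the constant $C$, and taking square roots gives \eqref{bdslogm}.

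The hard part will be the generalization step, specifically securing the \emph{fast} $1/m$ rate rather than the crude $1/\sqrt m$ one. This requires (a) verifying that the squared loss composed on $\mathcal G_j$ is smooth and uniformly bounded so that the optimistic-rate / local-Rademacher machinery of \citet{srebro2010smoothness} applies; (b) correctly propagating the Rademacher complexity from the base class $\stein_j\Phi$ through the averaging operation and through the Lipschitz squaring map (a contraction argument on the bounded loss); and (c) faithfully tracking the polylogarithmic $\log^3 m$ factors that such optimistic bounds introduce, so that they land exactly where the theorem claims. By contrast, the reduction to a coordinatewise curve-fitting loss and the empirical bound from the approximate fixed-point condition are essentially bookkeeping.
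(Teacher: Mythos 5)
Your proposal is correct and follows essentially the same route as the paper's proof: decompose $\S_k^2$ into coordinatewise expected squared losses of $g_{X,j}$, bound the empirical loss by $\epsilon_j^2/m$ via the approximate fixed-point condition, control $\Rm(\mathcal G_j)\le\Rm(\steinp_j\Phi)\le R_j/\sqrt m$ by the averaging property, and invoke the smooth-loss optimistic-rate bound of \citet{srebro2010smoothness} uniformly over the class before summing over $j$. The only cosmetic difference is that you restate the Srebro bound with the $\sqrt{\hat L}$ cross-term absorbed by AM--GM, and you are in fact slightly more careful than the paper in flagging the union bound over the $d$ coordinates.
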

\paragraph{Remark}
Recalling \eqref{shp}, Eq~\eqref{bdslogm} provides a uniform bound 
$$
\sup_{||f ||_{\H_p^{+}} \leq 1} \big \{ \E_{\mu_{X^*}} f - \E_p f\big \} = O(m^{-1/2}{\log^{1.5} m}).
$$
This is a uniform bound that controls the worse error uniformly among all $f\in \H_p^+$. 
It is unclear if the logarithm factor $\log^{1.5}m$ is essential. 
In the following, we present a result that has an $O(1/\sqrt{m})$ rate, 
without the logarithm factor, but only holds for  individual functions. 
\begin{thm}\label{thm:fxbound}
Let $\F_\infty$ be the set of linear span of the Steinalized features: 
\begin{align}\label{finfty}
f(\vx) = \E_{\vw\sim p_\vw} [\vv v(\vw)^\top \steinpx \phi(\vx, \vw)], 
\end{align}
where $\vv v(\vw)=[v_1(\vw),\ldots, v_d(\vw)]\in \R^d$ is the combination weights that satisfy $\sup_{\vw} ||\vv v(\vw)||_\infty <\infty$. We may define a norm on $\F_\infty$ by $||f||_{\F_\infty}^2 := \inf_{\vv v} \sum_{j=1}^d\sup_{\vw}  |v_j(\vw)|^2$, where $\inf_{\vv v}$ is taken on all $\vv v(w)$ that satisfies \eqref{finfty}. 

Assume Assumption~\ref{ass:ass} holds, then for any given function $f\in \F_\infty$ with $||f||_{\F_\infty}\leq 1$, we have with at least probability $1-\delta$, 
$$
\left |   \E_{\hat\mu_{X^*}} f - \E_p f  \right   | \leq \frac{C}{\sqrt{m}} (1+ \epsilon + \sqrt{2\log(1/\delta)}), 
$$
where $C$ is a constant that depends on $R$ and $M$. 
 \end{thm}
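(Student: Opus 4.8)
The plan is to turn the claim into a single generalization bound and then control the resulting empirical process by a Rademacher complexity argument. First I observe that every $f\in\F_\infty$ is a Steinalized function: setting $\ff(\vx)=\E_{\vw\sim p_\vw}[\vv v(\vw)\phi(\vx,\vw)]$, linearity of the Stein operator gives $f(\vx)=\steinpx^\top\ff(\vx)$, so Stein's identity forces $\E_p f=0$ and the task reduces to bounding $|\E_{\hat\mu_{X^*}}f|$. Introducing the residual curve $\vv g_X(\vw):=\E_{\vx\sim\hat\mu_X}[\steinpx\phi(\vx,\vw)]$ from \eqref{gxD} and exchanging the two expectations, I can write $\E_{\hat\mu_{X^*}}f=\E_{\vw\sim p_\vw}[\vv v(\vw)^\top\vv g_{X^*}(\vw)]$, so the goal is to show this population inner product is $O(1/\sqrt m)$.

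Next I would split the population quantity into an empirical term and a generalization gap,
\begin{align*}
\E_{\hat\mu_{X^*}}f
&= \frac1m\sum_{\ell=1}^m \vv v(\vw_\ell)^\top\vv g_{X^*}(\vw_\ell) \\
&\quad + \Big(\E_{\vw}[\vv v(\vw)^\top\vv g_{X^*}(\vw)] - \frac1m\sum_{\ell=1}^m\vv v(\vw_\ell)^\top\vv g_{X^*}(\vw_\ell)\Big).
\end{align*}
The empirical term is controlled directly by the approximate fixed-point condition: writing $V_j:=\sup_\vw|v_j(\vw)|$ and choosing $\vv v$ to attain the infimum defining $\|f\|_{\F_\infty}$ (so $\sum_j V_j^2\le\|f\|_{\F_\infty}^2\le1$), part (1) of Assumption~\ref{ass:ass} gives $|g_{X^*,j}(\vw_\ell)|\le\epsilon_j/\sqrt m$, and a term-by-term bound followed by Cauchy--Schwarz (using $\sum_j\epsilon_j^2=\epsilon^2$) yields $|\frac1m\sum_\ell \vv v(\vw_\ell)^\top\vv g_{X^*}(\vw_\ell)|\le\epsilon/\sqrt m$.

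The crux is the generalization gap, and here lies the main obstacle: $X^*$, and hence $\vv g_{X^*}$, depends on the very features $\{\vw_\ell\}$ being averaged, so the summands are not independent and naive concentration fails. I would instead prove a uniform bound over the feature-independent class $\Psi:=\{\vw\mapsto\vv v(\vw)^\top\vv g_X(\vw):X\}$, which then applies at $X=X^*$. The key structural observation is that each coordinate $g_{X,j}$ is an average over the $n$ particles of the maps $\vw\mapsto\steinp_{x^j}\phi(\vx_i,\vw)$, hence lies in the convex hull of $\steinp_j\Phi$; since Rademacher complexity is invariant under taking convex hulls, and multiplication by the fixed bounded weight $v_j$ inflates it by at most $V_j$ (a contraction/comparison estimate), subadditivity across coordinates and Cauchy--Schwarz give $\Rm(\Psi)\le\sum_j V_j\,\Rm(\steinp_j\Phi)\le\sum_j V_j R_j/\sqrt m\le R/\sqrt m$. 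Together with the uniform bound $\sup_\vw|\vv v(\vw)^\top\vv g_X(\vw)|\le\sum_j V_j M_j\le M$ from part (2), standard symmetrization plus McDiarmid's inequality (each $\vw_\ell$ moves the empirical average by at most $2M/m$) give, with probability at least $1-\delta$, a gap of at most $2\Rm(\Psi)+M\sqrt{2\log(1/\delta)/m}\le(2R+M\sqrt{2\log(1/\delta)})/\sqrt m$. Adding the two pieces produces $|\E_{\hat\mu_{X^*}}f-\E_p f|\le\frac{1}{\sqrt m}(\epsilon+2R+M\sqrt{2\log(1/\delta)})$, which is the stated bound with $C$ depending only on $R$ and $M$.

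I expect the convex-hull and contraction steps to be where the real care is needed: convex-hull invariance is what prevents the $n$ particles from inflating the complexity, while the contraction estimate is what lets the fixed multiplier $\vv v$ be absorbed at the cost of $V_j$, and it is the combination of these per-coordinate $1/\sqrt m$ rates through the $\F_\infty$-norm that makes the constant independent of $f$. A minor point to check is that the absolute-value form of the symmetrization and contraction steps cost only universal constants, which is precisely why the final $C$ is left unspecified.
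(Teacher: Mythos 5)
Your proof is correct and reaches the stated bound, but it departs from the paper's argument at the key Rademacher-complexity step, in a way worth recording. Both arguments begin identically: $\E_p f=0$ by Stein's identity after exchanging the two expectations; the error is rewritten as $\E_{\vw\sim p_\vw}[\vv v(\vw)^\top \vv g_{X^*}(\vw)]$ and split into an empirical average over $\{\vw_\ell\}$ (controlled by the approximate fixed-point condition, giving the $\epsilon/\sqrt{m}$ term) plus a uniform deviation over an $X$-indexed class; and the $n$-particle average inside $g_{X,j}$ is absorbed using the convexity/subadditivity of Rademacher complexity. The differences are two. First, the paper works coordinate-by-coordinate, applying McDiarmid separately to each class $v_j\mathcal G_j$ and summing the $d$ resulting high-probability bounds (strictly this requires a union bound over $j$, which the paper elides), whereas you aggregate everything into the single scalar class $\Psi$ and apply symmetrization and McDiarmid once; this is cleaner and sidesteps the union bound. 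Second, and more substantively, to control $\Rm(v_j\mathcal G_j)$ the paper invokes Lemma~\ref{lem:fg}.5, i.e.\ the polarization identity $fg=\frac{1}{4}(f+g)^2-\frac{1}{4}(f-g)^2$ together with Lipschitz contraction of the square, which yields $\Rm(v_j\mathcal G_j)\le 2(\Rm(\mathcal G_j)+V_j/\sqrt{m})(M_j+V_j)$ and hence a final constant of order $R^2+V^2+M^2$; you instead use the multiplier form of the contraction principle, $\Rm(v_j\mathcal G_j)\le V_j\,\Rm(\mathcal G_j)$, exploiting that $v_j$ is a \emph{fixed} function, so that conditionally on the sample the multipliers $v_j(\vw_\ell)$ are fixed coefficients, the supremum is a convex function of those coefficients, and its maximum over the cube $[-V_j,V_j]^m$ is attained at a sign vertex where $v_j(\vw_\ell)\sigma_\ell$ has the law of $V_j\sigma_\ell$. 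That estimate is not among the properties listed in the paper's Lemma~\ref{lem:fg}, so you should state and prove it explicitly (the one-line convexity argument just sketched suffices), but it is valid and buys a strictly sharper constant, linear rather than quadratic in $R$ and $M$. Two trivialities to patch: the infimum defining $||f||_{\F_\infty}$ need not be attained, so take a near-minimizer; and your repeated Cauchy--Schwarz steps $\sum_j V_j\epsilon_j\le\epsilon$, $\sum_j V_jR_j\le R$, $\sum_j V_jM_j\le M$ all rely on $\sum_j V_j^2\le 1$, which is exactly the norm constraint $||f||_{\F_\infty}\le 1$.
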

 The $\F_\infty$ defined above is closely related to the RKHS $\H_p$. 
In fact, one can show that $\F_\infty$ is a dense subset of $\H_p$ \citep{rahimi2008uniform} and is hence quite rich if $k(\vx,\vx')$ is set to be universal. 

\section{Conclusion}
We analyze SVGD through the eye of moment matching. 
Our results are non-asymptotic in nature, 
and provide a highly practical framework for analyzing the performance of SVGD related to different choices of kernels. 
Beyond the results we develop on linear and random features, 
it is a promising direction to leverage our framework to develop a more systematic way of optimizing the choice of kernels, especially for the query-specific inference that focuses on specific test functions. 
A particularly appealing idea is to ``program'' the inference algorithm by adding features that serve specific purposes so that the algorithm can be easily adapted to meet the needs of different users. 
In general, our perspective of viewing Stein's identity and Stein equation as a new foundation of approximation inference provides further opportunities for deriving new styles of inference algorithms with distinctive advantages. 

Another advantage of our framework is that it separates the design of the fixed point equation with the numerical algorithm used to achieve the fixed point. 
In this way, the iterative algorithm does not have to be derived as an approximation of an infinite dimensional gradient flow as the original SVGD does. This allows us to apply various practical numerical methods and acceleration techniques to solve the fixed point equation faster, with convergence guarantees. 
%



\bibliography{bibrkhs_stein}
\bibliographystyle{icml2017}



\end{document}